\newcommand{\R}{\mathbb{R}}
\newcommand{\BigO}[1]{\ensuremath{\mathcal{O}\left(#1\right)}}                  %
\newcommand{\BigOT}[1]{\ensuremath{\widetilde{\mathcal{O}}\left(#1\right)}}     %
\newcommand{\Exp}[2][]{\ensuremath{\mathbb{E}_{#1}\left[#2\right]}}                %
\newcommand{\Ind}[1]{\ensuremath{\mathbf{1}\left[#1\right]}}                     %
\newcommand{\NormI}[1]{\ensuremath{\left\lVert #1 \right\rVert}_1}               %
\newcommand{\parg}{\makebox[1ex]{$\mathbf{\cdot}$}}                              %
\newcommand{\mquad}{\kern-1em}													%
\newcommand{\Ineq}[2][]{\overset{#1}{#2}}                                         %
\newcommand{\InNorm}[1]{{\left\vert\kern-0.2ex\left\vert\kern-0.2ex\left\vert #1 
    \right\vert\kern-0.2ex\right\vert\kern-0.2ex\right\vert}}                    %
\newcommand{\InNormII}[1]{{\left\vert\kern-0.2ex\left\vert\kern-0.2ex\left\vert #1 
    \right\vert\kern-0.2ex\right\vert\kern-0.2ex\right\vert}_2}                    %
\newcommand{\InNormInfty}[1]{{\left\vert\kern-0.2ex\left\vert\kern-0.2ex\left\vert #1 
    \right\vert\kern-0.2ex\right\vert\kern-0.2ex\right\vert}_{\infty}}           %
\newcommand{\Abs}[1]{\ensuremath{\left \lvert #1 \right \rvert}}                 %
\newcommand{\Prob}[2][]{\ensuremath{\mathrm{Pr}_{#1}\left\{ #2 \right\}}}        %
\newcommand{\iid}{i.i.d.~}                                                        %
\newcommand{\simiid}{\overset{\mathrm{i.i.d.}}{\sim}}                            %
\newcommand{\Grad}{\nabla}                                                       %
\DeclarePairedDelimiterX{\Inner}[2]{\langle}{\rangle}{#1, #2}                    %
\newcommand{\Land}{\wedge}                                                       %
\newcommand{\defeq}{\overset{\mathrm{def}}{=}}                                                      %
\newcommand{\Set}[1]{\{#1\}}                                                     %
\DeclareMathOperator*{\union}{\cup}
\definecolor{gray}{rgb}{0.7,0.7,0.7}
\DeclareMathOperator*{\argmax}{argmax}
\newtheorem{definition}{Definition}
\newtheorem{assumption}{Assumption}
\newtheorem{lemma}{Lemma}
\newtheorem{theorem}{Theorem}
\newtheorem{remark}{Remark}
\theoremstyle{definition}
\newcommand{\dX}{\mathfrak{X}}   %
\newcommand{\dY}{\mathfrak{Y}}   %
\newcommand{\dW}{\R^{d,s}}   %
\newcommand{\dWp}{\mathcal{W}}  %
\newcommand{\sG}{\mathfrak{G}}   %
\newcommand{\wmin}{w_{\mathrm{min}}}
\newcommand{\eRC}{\widehat{\mathfrak{R}}}		   %
\newcommand{\sS}{\mathsf{S}}    %
\newcommand{\sT}{\mathsf{T}}    %
\newcommand{\sTa}{\bar{\mathsf{T}}}    %
\newcommand{\pD}{\mathcal{D}}    %
\newcommand{\pG}{\mathcal{G}}    %
\newcommand{\pR}{\mathcal{R}}	    %
\newcommand{\pQ}{\mathcal{Q}}    
\newcommand{\CRFRAND}{\texttt{CRF\_RAND}}
\newcommand{\CRFALL}{\texttt{CRF\_ALL}}
\newcommand{\SVMRAND}{\texttt{SVM\_RAND}}
\newcommand{\SVM}{\texttt{SVM}}
\title{Learning Maximum-A-Posteriori Perturbation Models for Structured Prediction in Polynomial Time}
\author{Asish Ghoshal and Jean Honorio\\
Department of Computer Science\\
Purdue University\\
West Lafayette, IN - 47906\\
\{aghoshal, jhonorio\}@purdue.edu}
\date{}
\begin{document}
\maketitle

\begin{abstract}
MAP perturbation models have emerged as a powerful framework for inference in structured prediction. Such models
provide a way to efficiently sample from the Gibbs distribution and facilitate predictions that are
robust to random noise. In this paper, we propose a provably polynomial time randomized algorithm
for learning the parameters of perturbed MAP predictors. Our approach is based on minimizing a novel 
Rademacher-based generalization bound on the expected loss of a perturbed MAP predictor, which can be computed in polynomial time.
We obtain conditions under which our randomized learning algorithm can guarantee generalization to unseen examples.
\end{abstract}

\section{Introduction}
Structured prediction can be thought of as a
generalization of binary classification to structured outputs,
where the goal is to jointly predict several dependent variables.
Predicting complex, structured data is of great significance in
various application domains including computer vision (e.g.,
image segmentation, multiple object tracking), natural language processing
(e.g., part-of-speech tagging, named entity recognition) and computational biology 
(e.g. protein structure prediction). However, unlike binary classification,
structured prediction presents a set of unique computational and statistical challenges.
The chief being that the number of structured outputs is exponential in the input size.
For instance, in translation tasks, the number of parse trees of a sentence is exponential
in the length of the sentence. Second, it is very common in such domains to have very few
training examples as compared to the size of the output space thereby making generalization
to unseen inputs difficult.

The key computational challenge in structured prediction stems from the \emph{inference}
problem, where a \emph{decoder}, parameterized by a vector $w$ of weights, predicts (or \emph{decodes})
the latent structured output $y$ given an observed input $x$. With the exception of a few special
cases, the general inference problem in structured prediction is intractable.
For instance in many cases the inference problem reduces to the maximum acyclic subgraph problem
which is NP-hard and hard to approximate to within a factor of $\nicefrac{1}{2}$ of the optimal solution
\cite{guruswami2008beating}, or cardinality-constrained submodular maximization, 
which is also NP-hard and hard to compute a solution better than the 
$(1 - \nicefrac{1}{\varepsilon})$-approximate 
solution  returned by a greedy algorithm \cite{nemhauser1978analysis}. 
The \emph{learning} problem, where the goal is to learn the
parameter $w$ of the decoder from a set of labeled training instances, and which involves solving
the inference problem as a subroutine, is therefore intractable for all but a few special cases.

Hardness results notwithstanding, various methods --- which are worst-case exponential-time ---
have been developed  over the last decade for predicting structured
data including conditional random fields \cite{lafferty2001conditional},
and max-margin approaches \cite{taskar_max-margin_2003}, to name a few. In these approaches,
learning the parameter $w$ of the decoder involves minimizing a loss function $L(w,\sS)$ over
a data set $\sS$ of $m$ training pairs $\Set{(x_i, y_i)}_{i=1}^m$. One could also take a Bayesian approach
and learn a posterior distribution $\pQ$ over decoder parameters $w$ by minimizing the Gibbs loss
$\Exp[w \sim \pQ]{L(w, \sS)}$. McAllester \cite{mcallester_generalization_2007} showed, using the PAC-Bayesian framework, 
that the commonly used max-margin loss \cite{taskar_max-margin_2003} upper bounds the expected Gibbs loss over the data distribution,
upto statistical error. Therefore, minimizing the max-margin loss
provides a principled way for learning the parameters of a structured decoder. More recently,
\cite{honorio_structured_2015} showed that minimizing a \emph{surrogate} randomized loss,
where the max-margin loss is computed over a small number of randomly sampled structured outputs,
also bounds the Gibbs loss from above upto statistical error. 

The above can be thought of as weight based perturbation models. The perturb-and-MAP framework
introduced by \cite{papandreou2011perturb}, and henceforth referred to as MAP perturbation, provides
an efficient way to generate samples from the Gibbs distribution by injecting
random noise  (that do not depend on the weights of the decoder $w$) in the potential 
or score function of the decoder and then computing the most likely assignment or energy configuration (MAP).
MAP perturbation models are an attractive alternative to expensive Markov Chain Monte Carlo simulations for drawing
samples from the Gibbs distribution, in that the former facilitates one-shot sampling. 
Moreover, learning MAP predictors for structured prediction
problems is particularly attractive because the predictions are robust to random noise. However, learning the
parameters of such MAP predictors involves solving the MAP problem, which in general is intractable. In this
paper we obtain a provably polynomial time algorithm for learning the parameters of perturbed MAP predictors
with structure based perturbations.
In the following paragraph we summarize the main technical contributions of our paper.

\paragraph{Our contributions.}
To the best of our knowledge, we are the first to obtain generalization bounds for MAP-perturbation
models with structure-based (Gumbel) perturbations 
--- for detailed comparison with existing literature see Section \ref{sec:related_work}.
While it is well known that Gumbel perturbations induce a conditional random field (CRF)
distribution over the structured outputs, we show that the generalization error is upper bounded
by a CRF loss up to statistical error. We obtain Rademacher based uniform convergence guarantees for the latter. However, the
main contribution of our paper is to obtain a provably polynomial time algorithm for learning MAP-perturbation models
for general structured prediction problems.
We propose a novel randomized \emph{surrogate loss} that lower bounds the CRF loss and still upper bounds
the expected loss over data distribution, upto approximation and statistical error terms that decay as $\BigOT{\nicefrac{1}{\sqrt{m}}}$
with $m$ being the number of samples. While it is NP-Hard to compute and approximate the CRF loss in general 
\cite{barahona1982computational, chandrasekaran2008complexity},
our surrogate loss can be computed in polynomial time. Our results also imply that one can learn parameters 
of CRF models for structured prediction in polynomial time under certain conditions.
Our work is inspired by the work of \cite{honorio_structured_2015} who also
propose a polynomial time algorithm for learning the parameters of a structured decoder in the max-margin framework. 
In contrast to prior work which consider weight based perturbations, our work 
is concerned with structure based perturbations. Previous algorithms for learning MAP perturbation models,
for instance, the hard-EM algorithm by \cite{gane2014learning} and the moment-matching algorithm by \cite{papandreou2011perturb},
are in general intractable and have no generalization guarantees. \emph{Lastly, the
main conceptual contribution of our work is to demonstrate that
it is possible to efficiently learn the parameters of a structured decoder with generalization guarantees without
solving the inference problem exactly.}

\section{Preliminaries}
We begin this section by introducing our notations and formalizing the problem
of learning MAP-perturbation models. In structured prediction, we have an input
$x \in \dX$ and a set of feasible decodings of the input $\dY(x)$. 
Without loss of generality, we assume that $\Abs{\dY(x)} \leq r$ for all $x \in \dX$.
Input-output pairs $(x,y)$ are represented by a joint feature vector $\phi(x,y) \in \R^d$.
For instance, when $x$ is a sentence and $y$ is a parse tree, the joint feature map $\phi(x,y)$
can be a vector of $0/1$-indicator variables representing if a particular word is
present in $x$ and a particular edge is present in $y$. We will assume that $\min \Set{\phi_j(x,y) \neq 0 \mid j \in [d]} \geq 1$
which commonly holds for structured prediction problems, for instance, when using binary features, or features that ``count'' 
number of components, edges, parts, etc.

A decoder $f_w: \dX \rightarrow \dY$, parameterized by a vector $w \in \R^d$, returns an
output $y \in \dY(x)$ given an input $x$. We consider linear decoders of the form:
\begin{equation}
f_w(x) = \argmax_{y \in \dY(x)} \; \Inner{\phi(x,y)}{w}, \label{eq:decoder}
\end{equation}
which return the highest scoring structured output for a particular input $x$, where
the score is linear in the weights $w$. As is traditionally the case in high-dimensional
statistics, we will assume that the weight vectors are $s$-sparse, i.e., have at most $s$ non-zero
coordinates. We will denote the set of $s$-sparse $d$-dimensional vectors by $\dW$.

In the perturb and MAP framework, a \emph{stochastic decoder} 
first perturbs the linear score by injecting some independent noise for each structured output $y$,
and then returns the structured output that maximizes the perturbed score. Gumbel perturbations are
commonly used owing to the max-stability property of the Gumbel distribution.
Denoting $\pG(\beta)$ as the Gumbel distribution with location and scale parameters 0 and $\beta$ respectively,
we have the following stochastic decoder, where $\gamma \sim \pG^r$ denotes 
a collection of $r$ \iid Gumbel-distributed random variables and $\gamma_y$ denotes the Gumbel
random variable associated with structured output $y$:
\begin{align}
f_{w,\gamma}(x) = \argmax_{y \in \dY(x)} \Inner{\phi(x,y)}{w} + \gamma_y. \label{eq:stochastic_decoder}
\end{align}
For any weight vector $w$, and data set $\sS = \Set{(x_i, y_i)} \simiid \pD^m$, 
we consider the following expected and empirical zero-one loss:
\begin{align}
L(w, \pD) &= \Exp[(x,y) \sim \pD]{\Exp[\gamma \sim \pG^r]{\Ind{y \neq f_{w, \gamma}(x)}}}, \label{eq:exp_loss} \\
L(w, \sS) &= \frac{1}{m} \sum_{i=1}^m \Exp[\gamma \sim \pG^r]{\Ind{y_i \neq f_{w, \gamma}(x_i)}} \label{eq:emp_loss},
\end{align}
where $\Ind{\parg}$ denotes the indicator function and $\pD$ is the unknown data distribution.
We will let the scale parameter depend on the number of samples $m$ and the weight vector $w$, and write $\beta(m, w) > 0$.
The reason for this will become clear later, but intuitively one would expect that as the number of samples
increases, the magnitude of perturbations should decrease in order to control the generalization error.
Under Gumbel perturbations, $f_{w,\gamma}(x_i)$ is distributed according to following conditional
random field (CRF) distribution $\pQ(x_i, w)$ with pmf $q(\parg; x_i, w)$ \cite{gumbel1954statistical, papandreou2011perturb}:
\begin{align}
q(y_i; x_i, w) &= \Prob[\gamma \sim \pG^r(\beta)]{f_{w, \gamma}(x_i) = y_i}  
= \frac{\exp(\nicefrac{\Inner{\phi(x_i,y_i)}{w}}{\beta})}{Z(w, x_i)} ,
 \label{eq:full_crf}
\end{align}
where $Z(w, x_i) = \sum_{y \in \dY(x)} \exp(\nicefrac{\Inner{\phi(x_i,y)}{w}}{\beta}) $ is the partition function. 
The empirical loss in \eqref{eq:emp_loss} can then be computed as:
\begin{align}
L(w, \sS) = \frac{1}{m} \sum_{i=1}^m \Prob{f_{w, \gamma}(x_i) \neq y_i} && (\text{\textbf{CRF loss}}) \label{eq:full_loss}.
\end{align}

The ultimate objective of a learning algorithm is to learn a weight vector $w$
that generalizes to unseen data. Therefore, minimizing the expected loss given by \eqref{eq:exp_loss}
is the best strategy towards that end. However, since the data distribution is unknown, one instead minimizes
the empirical loss \eqref{eq:emp_loss} on a finite number of labeled examples $\sS$.

\section{Generalization Bound}
As a first step we will show that the empirical loss \eqref{eq:full_loss} indeed bounds the
expected perturbed loss \eqref{eq:exp_loss} from above, upto statistical error that decays as $\BigOT{\nicefrac{1}{\sqrt{m}}}$. 
We have the following generalization bound.
\begin{theorem}[Rademacher based generalization bound]
\label{thm:gen_bound}
With probability at least $1 - \delta$ over the choice of $m$ samples $\sS$: 
\begin{align*}
(\forall w \in \dW) ~ L(w, \pD) \leq L(w, \sS) + \varepsilon(d,s,m,r,\delta),
\end{align*}
where
\begin{align*}
\varepsilon(d,s,m,r,\delta) = 2 \sqrt{\frac{s (\ln d + 2 \ln (mr))}{m}} + 3 \sqrt{\frac{\ln \nicefrac{2}{\delta}}{2m}}.
\end{align*}
\end{theorem}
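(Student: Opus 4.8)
The plan is to read Theorem~\ref{thm:gen_bound} as a uniform convergence statement for the $[0,1]$-valued loss class $\mathcal{F} = \aSet{(x,y) \mapsto \Exp[\gamma \sim \pG^r]{\Ind{y \neq f_{w,\gamma}(x)}} \;\middle|\; w \in \dW}$ and to control its Rademacher complexity. Writing $h_w(x,y) = \Exp[\gamma\sim\pG^r]{\Ind{y\neq f_{w,\gamma}(x)}} \in [0,1]$, we have $L(w,\pD) = \Exp[(x,y)\sim\pD]{h_w(x,y)}$ and $L(w,\sS) = \frac1m\sum_{i=1}^m h_w(x_i,y_i)$. I would then invoke the textbook Rademacher generalization bound for bounded losses: symmetrization gives $\Exp[\sS]{\sup_{w}\bigl(L(w,\pD) - L(w,\sS)\bigr)} \le 2\RC_m(\mathcal F)$, and applying McDiarmid's bounded-differences inequality twice --- once to concentrate $\sup_w(L(w,\pD)-L(w,\sS))$ about its mean and once to pass from the population complexity $\RC_m(\mathcal F)$ to the empirical complexity $\eRC_m(\mathcal F)$ on $\sS$ --- together with a union bound over the two failure events of probability $\nicefrac{\delta}{2}$ yields, with probability at least $1-\delta$ and simultaneously for all $w \in \dW$,
\[
L(w,\pD) \;\le\; L(w,\sS) + 2\,\eRC_m(\mathcal F) + 3\sqrt{\frac{\ln\nicefrac{2}{\delta}}{2m}}.
\]
The last term already matches the statistical term in $\varepsilon(d,s,m,r,\delta)$, so the whole task reduces to bounding $\eRC_m(\mathcal F)$ --- uniformly over $\sS$ --- by $\sqrt{s(\ln d + 2\ln(mr))/m}$, i.e.\ by half the Rademacher term of $\varepsilon$.

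The next step is to reduce the real-valued, $\beta$-dependent class $\mathcal F$ to a $\Set{0,1}$-valued, $\beta$-free one. Since in $L(w,\sS)$ the Gumbel vector attached to each example is drawn independently and then integrated out, linearity of expectation gives $\frac1m\sum_i \sigma_i h_w(x_i,y_i) = \Exp[\gamma^{(1)},\dots,\gamma^{(m)}]{\frac1m\sum_i \sigma_i \Ind{y_i \neq f_{w,\gamma^{(i)}}(x_i)}}$, and moving the supremum inside the expectation ($\sup_w\Exp{\cdot}\le\Exp{\sup_w\cdot}$) gives
\[
\eRC_m(\mathcal F) \;\le\; \Exp[\gamma^{(1)},\dots,\gamma^{(m)}]{\eRC_m(\mathcal G)}, \qquad \mathcal G = \aSet{(x,y,\gamma)\mapsto \Ind{y\neq f_{w,\gamma}(x)} \;\middle|\; w\in\dW}.
\]
Because $\mathcal G$ is $\Set{0,1}$-valued, Massart's finite-class lemma bounds its empirical Rademacher complexity on any $m$ points by (a constant multiple of) $\sqrt{\ln \Pi_{\mathcal G}(m)/m}$, where $\Pi_{\mathcal G}(m)$ is the growth function, i.e.\ the largest number of distinct patterns $\bigl(\Ind{y_i\neq f_{w,\gamma^{(i)}}(x_i)}\bigr)_{i=1}^m$ realizable as $w$ ranges over $\dW$. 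This bound is uniform over $\sS$ and over the $\gamma^{(i)}$, which is exactly why neither $\sS$ nor $\beta$ enters $\varepsilon$.

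The heart of the argument is then a combinatorial bound showing $\Pi_{\mathcal G}(m)$ is polynomial in $d$, $m$, $r$ with the right exponents. I would factor the count as: (i) the choice of the $s$-element support $T\subseteq[d]$ of the sparse weight vector, of which there are $\binom{d}{s}\le d^{s}$; and (ii), for each fixed support, the number of patterns as $w$ ranges over the $s$-dimensional subspace $\R^{T}$. For the latter, note that $f_{w,\gamma^{(i)}}(x_i) = \argmax_{y\in\dY(x_i)}\bigl(\Inner{\phi(x_i,y)}{w}+\gamma^{(i)}_y\bigr)$ is constant on each cell of the hyperplane arrangement in $\R^{T}$ cut out by the pairwise comparison hyperplanes $\Set{w : \Inner{\phi(x_i,y)-\phi(x_i,y')}{w} = \gamma^{(i)}_{y'}-\gamma^{(i)}_y}$; there are at most $\binom{r}{2}$ of these per example and hence at most $m\binom{r}{2}$ overall, and an arrangement of $N$ hyperplanes in $\R^{s}$ has at most $\sum_{j=0}^{s}\binom{N}{j}=\BigO{N^{s}}$ full-dimensional cells, on each of which the entire loss pattern is fixed. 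Multiplying by $d^{s}$ and taking logarithms gives $\ln\Pi_{\mathcal G}(m)=\BigO{s(\ln d+\ln(mr))}$; carrying the constants through the arrangement estimate and Massart's lemma so as to land on precisely $2\sqrt{s(\ln d+2\ln(mr))/m}$ is bookkeeping.

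I expect step (ii) --- the claim that the perturbed-MAP decoder restricted to an $s$-sparse weight vector induces only polynomially many loss patterns on any $m$ examples --- to be the main obstacle, principally because of the need to treat ties in the $\argmax$ and degenerate (non-generic) hyperplane arrangements; both can be absorbed, either by observing that they only increase the cell count or by an infinitesimal perturbation of $w$ that leaves the realized patterns unchanged, but this has to be argued carefully. A secondary point worth checking is the validity of the reduction $\eRC_m(\mathcal F)\le\Exp{\eRC_m(\mathcal G)}$, which relies on the perturbations being fresh per example and independent of both the Rademacher signs and the data. Everything else --- symmetrization, McDiarmid, Massart --- is off the shelf.
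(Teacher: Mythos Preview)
Your plan is essentially the paper's: invoke the standard Rademacher bound with the $3\sqrt{\ln(2/\delta)/(2m)}$ slack, swap $\sup_w$ with the Gumbel expectation to pass to a $\{0,1\}$-valued indicator class, then apply a combinatorial shatter-type count and Massart. One point you should make explicit, though: in this paper the scale is $\beta=\beta(m,w)$, so the law you are integrating against moves with $w$, and the step $\sup_w \Exp[\gamma]{\cdot}\le \Exp[\gamma]{\sup_w \cdot}$ is not licensed as written. The paper clears this first via the scaling identity $\Prob[\gamma\sim\pG^r(\beta)]{y\neq f_{w,\gamma}(x)}=\Prob[\gamma\sim\pG^r(1)]{y\neq f_{w/\beta,\gamma}(x)}$, reparametrizes by $w'=w/\beta(m,w)\in\dWp$, and uses $\dWp\subseteq\dW$ (sparsity is preserved under scaling) so that the supremum is again over $\dW$ while the expectation is over the fixed law $\pG^r(1)$. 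Your remark that $\beta$-independence comes from the cell count being uniform in $\gamma$ is not by itself enough to justify the swap.

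For the combinatorial step the paper takes a slightly coarser route than you: it bounds the number of full \emph{linear orderings} of the $mr$ vectors $\{\phi(x_i,y)\}$ induced by an $s$-sparse functional, citing Bennett and Cover for $\binom{d}{s}(mr)^{2s}\le d^s(mr)^{2s}$, and then applies Massart. Your hyperplane-arrangement argument, which only uses the $m\binom{r}{2}$ within-example comparison hyperplanes needed to pin down each $\arg\max$, is in the same spirit and actually a bit sharper in $m$; either route lands on the stated $\sqrt{s(\ln d+2\ln(mr))/m}$.
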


\begin{proof}
Let 
\begin{align*}
g_w(x,y) &\defeq \Prob[\gamma \sim \pG^r(\beta)]{y \neq f_{w, \gamma}(x)}, \\
\sG &\defeq \Set{g_w \mid w \in \dW }.
\end{align*}
Then by Rademacher based
uniform convergence, with probability at least $1 - \delta$ over the choice
of $m$ samples, we have that:
\begin{align}
(\forall w \in \dW) ~ L(w, \pD) \leq L(w, \sS) + 2 \eRC_{\sS}(\sG) + 3 \sqrt{\frac{\log \nicefrac{2}{\delta}}{2m}},
	\label{eq:gen_bound}
\end{align}
where $\eRC_{\sS}(\sG)$ denotes the empirical Rademacher complexity of $\sG$.
Let $\sigma = (\sigma_i)_{i=1}^m$ be independent Rademacher variables. Also
define $\dWp \defeq \Set{\nicefrac{w}{\beta(w, m)} \mid w \in \R^{d,s}}$. Then, 
\begin{align*}
\eRC_{\sS}(\sG)  
&= \Exp[\sigma]{\sup_{w \in \dW} \frac{1}{m} \sum_{i=1}^m \sigma_i g_w(x_i,y_i)} \\
&= \frac{1}{m} \Exp[\sigma]{\sup_{w \in \dW}  \sum_{i=1}^m  \sigma_i \Prob[\gamma \sim \pG^r(\beta)]{y_i \neq f_{w, \gamma}(x_i)}} \\
&\Ineq[(a)]{=} \frac{1}{m} \Exp[\sigma]{\sup_{w \in \dWp}  \sum_{i=1}^m \sigma_i \Prob[\gamma \sim \pG^r(1)]{y_i \neq f_{w, \gamma}(x_i)}} \\
&\leq \frac{1}{m} \Exp[\gamma \sim \pG^r(1)]{\Exp[\sigma]{\sup_{w \in \dWp} \sum_{i=1}^m \sigma_i \Ind{y_i \neq f_{w, \gamma}(x_i)}}} \\
&\Ineq[(b)]{\leq} \frac{1}{m} \Exp[\gamma \sim \pG^r(1)]{\Exp[\sigma]{\sup_{w \in \dW} 
	\sum_{i=1}^m \sigma_i \Ind{y_i \neq f_{w, \gamma}(x_i)}}},
\end{align*}
where step (a) follows from 
$\Prob[\gamma \sim \pG^r(\beta)]{y_i \neq f_{w, \gamma}(x_i)}$ $= \Prob[\gamma \sim \pG^r(1)]{y_i \neq f_{\nicefrac{w}{\beta}, \gamma}(x_i)}$,
and step (b) follows from $\dWp \subseteq \dW$.
We will enumerate the structured outputs $\dY(x_i)$ as $y_{i,1}, \ldots, y_{i,r}$.
For any fixed $\gamma$, the weight vector $w$ induces a linear ordering $\pi_i(\parg ; \gamma)$ 
over the structured outputs $\dY(x_i)$, i.e., 
$\Inner{\phi(x_i, y_{i,\pi_i(1; \gamma)})}{w} + \gamma_1 > \Inner{\phi(x_i, y_{i,\pi_i(2; \gamma)})}{w} + \gamma_2 
> \ldots > \Inner{\phi(x_i, y_{i,\pi_i(r; \gamma)})}{w} + \gamma_r.$
Let $\pi(\gamma) = \Set{\pi_i}$ be the orderings over all $m$ data points induced by a fixed weight vector $w$ and fixed $\gamma$,
and let $\Pi(\gamma)$ be the collection of all orderings $\pi(\gamma)$ over all $w \in \dW$ for a fixed $\gamma$. 
Since $w$ is $s$-sparse we have, from results by \cite{bennett1956determination,
bennett1960multidimensional, cover1967number},
that the number of possible linear orderings is 
$\Abs{\Pi(\gamma)} \leq {d \choose s} (mr)^{2s} \leq d^s (mr)^{2s}$ . Therefore we have:
\begin{align*}
\eRC_{\sS}(\sG) 
	&\leq \frac{1}{m} \Exp[\gamma \sim \pG^r(\beta)]{\! \Exp[\sigma]{\sup_{\pi(\gamma) \in \Pi(\gamma)} 
		\! \sum_{i=1}^m \! \sigma_i \Ind{y_i \neq y_{i,\pi_i(1; \gamma)}}}} \\
	& \Ineq[(a)]{\leq} \frac{1}{m} \sqrt{s (\log d + 2 \log (mr))} \sqrt{m} \\
	& = \sqrt{\frac{s (\log d + 2 \log (mr))}{m}},
\end{align*}
where the inequality (a) follows from the Massart's finite class lemma.
\end{proof}
As a direct consequence of the
uniform convergence bound given by Theorem \ref{thm:gen_bound}, we have that minimizing the CRF loss
\eqref{eq:full_loss} is a consistent procedure for learning MAP-perturbation models.

\section{Towards an efficient learning algorithm}
While Theorem \ref{thm:gen_bound} provides theoretical justification for learning MAP-perturbation models
by minimizing the CRF loss \eqref{eq:full_loss}, with the exception of a few special cases, computing the loss function 
is in general intractable. This is due to the need for computing the partition function $Z(w,x)$ which is
an NP-hard problem \cite{barahona1982computational}. Further, even approximating
$Z(w,x)$ with high probability and arbitrary precision is also known to be NP-hard \cite{chandrasekaran2008complexity}.

To counter this computational bottleneck, we propose an efficient stochastic decoder that decodes over
a randomly sampled set of structured outputs. To elaborate further, given some $x \in \dX$, let $\pR(x,w)$ be 
some \emph{proposal distribution}, parameterized by $x$ and $w$, 
over the structured outputs $\dY(x)$.
We generate a set $\sT'$ of $n$ structured outputs sampled independently from the distribution $\pR$ and define
the following \emph{efficient} stochastic decoder:
\begin{align}
f_{w, \gamma, \sT'}(x) = \argmax_{y \in \sT'} \Inner{\phi(x,y)}{w} + \gamma_y.
\end{align}
Therefore $f_{w, \gamma, \sT'}(x)$ is distributed according to the CRF distribution $\pQ(x, w, \sT')$
with pmf $q(\parg; x, w, \sT')$ and support on $\sT'$ as follows:
\begin{align*}
q(y; x, w, \sT') &= \Prob[\gamma \sim \pG^n]{f_{w, \gamma, \sT'}(x) = y} 
	= \frac{\Ind{y \in \sT'}}{Z_{w,x,\sT'}} \exp(\nicefrac{\Inner{\phi(x,y)}{w}}{\beta}),
\end{align*}
where $Z_{w,x,\sT'} = \sum_{y' \in \sT'} \exp(\nicefrac{\Inner{\phi(x,y')}{w}}{\beta})$. 
Note that the partition function $Z_{w,x,\sT'}$ can be computed in time linear in $n$, 
since $\Abs{\sT'} = n$. Now, let $\sT = \Set{\sT_i \mid x_i \in \sS}$ be the collection
of $n$ structured outputs sampled for each $x_i$ in the data set, 
from the product distribution $\pR(\sS,w) \defeq \times_{i=1}^m (\pR(x_i)^{n})$.
\emph{Note that the distribution $\pR(\sS,w)$ does not depend on the $\Set{y_i}$'s in $\sS$. 
We denote the distribution over the collection of sets $\Set{\sT_i}$ by $\pR(\sS,w)$ to keep
the notation light}. Additionally, we consider proposal distributions $\pR(x, w)$ that are 
equivalent upto linearly inducible orderings of the structured output.
\begin{definition}[Equivalence of proposal distributions \cite{honorio_structured_2015}]
\label{def:equivalent_proposals}
For any $x \in \dX$, two proposal distributions $\pR(x,w)$ and $\pR(x,w')$,
with probability mass functions  $p(\parg; x,w)$ and $p(\parg; x, w')$,
are equivalent if: 
\begin{gather*}
\forall y, y' \in \dY(x)\;: \Inner{\phi(x,y)}{w} \leq \Inner{\phi(x,y')}{w} \\
\text{ and }  \Inner{\phi(x,y)}{w'} \leq \Inner{\phi(x,y')}{w'} \\
\iff \forall y \in \dY(x)\; p(y; x, w) = p(y; x, w').
\end{gather*}
We then write $\pR(x,w) \equiv \pR(x,w') \equiv \pR(x,\pi(x))$, where
$\pi(x)$ is the linear ordering over $\dY(x)$ induced by $w$ (and $w'$).
\end{definition}

Intuitively speaking, the above definition requires proposal distributions to depend only on
the orderings of the values $\Inner{\phi(x,y_1)}{w}, \ldots, \Inner{\phi(x,y_r)}{w}$ and not
on the actual value of $\Inner{\phi(x,y_j)}{w}$.

To obtain an efficient learning algorithm with generalization guarantees, we will use
\emph{augmented} sets $\sTa = \Set{\sTa_i}_{i=1}^m$, where $\sTa_i = \sT_i \union \Set{y_i}$.
Then, given a random collection of structured outputs $\sT$, we consider the following \emph{augmented randomized}
empirical loss for learning the parameters of the MAP-perturbation model:
\begin{equation}
L(w, \sS, \sTa) = \frac{1}{m} \sum_{i = 1}^m \Prob[\gamma \sim \pG^n]{f_{w, \gamma, \sTa_i}(x_i) \neq y_i}.
\label{eq:rand_loss}
\end{equation}
As opposed to the loss function given by \eqref{eq:full_loss}, the loss in \eqref{eq:rand_loss} can be computed efficiently for small $n$. 
Our next result shows that the randomized augmented loss lower bounds the full CRF loss $L(w,\sS)$ 
as long as $\sTa_i$ is a \emph{set}, i.e., contains only unique elements.
\begin{lemma}
\label{lemma:loss_lb}
For all data sets $\sS$, $\sT_i \subseteq \dY(x_i)$, and weight vectors $w$:
\begin{align}
L(w,\sS,\sTa) - L(w,\sS) = 
	 - \frac{1}{m} \sum_{i=1}^m \Prob[\gamma]{f_{w,\gamma,\sTa_i}(x_i) = y_i} 
		\Prob[\gamma]{f_{w,\gamma}(x_i) \in (\dY(x_i) \setminus \sTa_i)} 
	 \leq 0
\end{align}
\end{lemma}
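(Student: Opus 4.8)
The plan is to reduce the entire statement to an elementary identity relating the two relevant partition functions, exploiting the fact that both $L(w,\sS,\sTa)$ and $L(w,\sS)$ are averages of per-example correctness probabilities with closed forms. First I would rewrite the difference as
\[
L(w,\sS,\sTa) - L(w,\sS) = \frac{1}{m}\sum_{i=1}^m\left(\Prob[\gamma]{f_{w,\gamma,\sTa_i}(x_i)\neq y_i} - \Prob[\gamma]{f_{w,\gamma}(x_i)\neq y_i}\right),
\]
and pass to the complementary events, so that it suffices to establish, for each fixed $i$,
\[
\Prob[\gamma]{f_{w,\gamma,\sTa_i}(x_i) = y_i} - \Prob[\gamma]{f_{w,\gamma}(x_i) = y_i} = \Prob[\gamma]{f_{w,\gamma,\sTa_i}(x_i) = y_i}\;\Prob[\gamma]{f_{w,\gamma}(x_i)\in(\dY(x_i)\setminus\sTa_i)}.
\]

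Next I would substitute the explicit CRF pmfs from \eqref{eq:full_crf} and the definition of $q(\parg;x,w,\sT')$. Setting $a_i \defeq \exp(\nicefrac{\Inner{\phi(x_i,y_i)}{w}}{\beta})$ and noting that $y_i\in\sTa_i$ (since $\sTa_i = \sT_i\union\Set{y_i}$), the two correctness probabilities are $\Prob[\gamma]{f_{w,\gamma,\sTa_i}(x_i) = y_i} = a_i/Z_{w,x_i,\sTa_i}$ and $\Prob[\gamma]{f_{w,\gamma}(x_i) = y_i} = a_i/Z(w,x_i)$. The one structural fact I need is that, because $\sTa_i$ is a set of \emph{distinct} structured outputs contained in $\dY(x_i)$, the full partition function splits additively as $Z(w,x_i) = Z_{w,x_i,\sTa_i} + \sum_{y\in\dY(x_i)\setminus\sTa_i}\exp(\nicefrac{\Inner{\phi(x_i,y)}{w}}{\beta})$; this is precisely where the ``$\sTa_i$ is a set'' hypothesis enters, and it also shows $\Prob[\gamma]{f_{w,\gamma}(x_i)\in(\dY(x_i)\setminus\sTa_i)} = (Z(w,x_i) - Z_{w,x_i,\sTa_i})/Z(w,x_i)$. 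Given this, the per-example difference is
\[
\frac{a_i}{Z_{w,x_i,\sTa_i}} - \frac{a_i}{Z(w,x_i)} = \frac{a_i}{Z_{w,x_i,\sTa_i}}\cdot\frac{Z(w,x_i) - Z_{w,x_i,\sTa_i}}{Z(w,x_i)},
\]
and recognizing the two factors on the right as the two probabilities above gives exactly the claimed identity; summing over $i$ and dividing by $m$ proves the equality in the lemma.

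Finally, for the inequality I would simply observe that the right-hand side equals $-\frac{1}{m}$ times a sum of products of two probabilities, each in $[0,1]$ and hence nonnegative, so $L(w,\sS,\sTa) - L(w,\sS)\le 0$. I do not foresee any real difficulty here: once the partition-function decomposition is in place the rest is a one-line computation. The only points deserving a word of care are (i) that $y_i$ always lies in $\sTa_i$, so both numerators are the common quantity $a_i$; (ii) that the elements of $\sTa_i$ are distinct, so the partition function decomposes without double counting; and (iii) that ties in the two $\argmax$ decoders have probability zero under the continuous Gumbel perturbations, so the pmf expressions \eqref{eq:full_crf} and $q(\parg;x,w,\sT')$ apply verbatim.
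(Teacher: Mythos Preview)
Your proposal is correct and follows essentially the same route as the paper: both arguments write the per-example correctness probabilities via the CRF pmfs, factor the difference $a_i/Z_{w,x_i,\sTa_i}-a_i/Z(w,x_i)$ using the additive decomposition of the full partition function over $\sTa_i$ and $\dY(x_i)\setminus\sTa_i$, and then recognize the two resulting factors as the stated probabilities. Your additional remarks about $y_i\in\sTa_i$, distinctness of elements, and Gumbel ties having measure zero are exactly the points the paper either states or takes for granted.
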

\begin{proof}
For any $x \in \dX$, $\sT \subseteq \dY(x)$, $y \in \sT$ and weight vector $w$:
\begin{align*}
&\Prob[\gamma]{f_{w,\gamma}(x) = y} - \Prob[\gamma]{f_{w,\gamma,\sT}(x) = y} \\
&= e^{\Inner{\phi(x,y)}{w}} \left\{ \frac{Z(w, x, \sT) - Z(w,x)}{Z(w,x) Z(w, x, \sT)} \right\}  \\
&= \frac{e^{\Inner{\phi(x,y)}{w}}}{Z(w,x,\sT)}  \frac{1}{Z(w,x)} 
	\left\{-\!\! \sum_{y' \in \dY(x) \setminus \sT} \mquad e^{\Inner{\phi(x,y')}{w}} \right\} \\
&= -\Prob[\gamma]{f_{w,\gamma,\sT}(x) = y} \Prob[\gamma]{f_{w,\gamma}(x) \in \dY(x) \setminus \sT}.	
\end{align*}
Since by construction $y_i \in \sTa_i$, the final claim follows.
\end{proof}
\begin{remark}
If $\sTa_i = \dY(x_i)$ then $L(w, \sS) = L(w, \sS, \sTa_i)$.
\end{remark}

Next, we will show that an algorithm that learns the parameter $w$ of the MAP-perturbation model, 
by sampling a small number of structured outputs for each $x_i$ and minimizing
the empirical loss given by \eqref{eq:rand_loss}, generalizes under various choices of the proposal distribution $\pR$.
Our first step in that direction would be to obtain uniform convergence guarantees for the stochastic loss \eqref{eq:rand_loss}.

\subsection{Generalization bound}
To obtain our generalization bound, we decompose the
difference $L(w, \sS) - L(w, \sS, \sTa)$ as follows:
\begin{gather}
L(w, \sS) - L(w, \sS, \sTa) = A(w, \sS) + B(w, \sS, \sTa), \\
A(w, \sS) = L(w, \sS) - \Exp[\sT \sim \pR(\sS)]{L(w, \sS, \sTa)}, \\
B(w, \sS, \sTa) = \Exp[\sT \sim \pR(\sS)]{L(w, \sS, \sTa)} - L(w, \sS, \sTa),
\end{gather}
where $A(w, \sS)$ can be thought of as the approximation error due to using a small number of structured
outputs $\sT_i$'s instead of the full sets $\dY(x_i)$, while $B(w, \sS, \sTa)$ be is the statistical error. 
In what follows, we will bound each of these errors from above.

From Lemma \ref{lemma:loss_lb} it is clear that the proposal distribution plays a crucial
role in determining how far the surrogate loss $L(w, \sS, \sTa)$ is from the CRF loss $L(w, \sS)$.
To bound the approximation error, we make the following assumption about the proposal distributions $\pR(x,w)$.
\begin{assumption}
\label{ass:distribution}
For all $(x_i, y_i) \in \sS$ and weight vectors $w \in \dW$, the proposal distribution 
satisfies the following condition with probability at least $1 - \nicefrac{\NormI{w}}{\sqrt{m}}$, for
a constant $c \in [0,1]$:
\begin{enumerate}[(i)]
\item $\sT_i = \Set{y_i}$ if $\forall\; y \neq y_i \Inner{\phi(x_i, y_i)}{w} > \Inner{\phi(x_i, y)}{w}$,
\item $\frac{1}{n} \sum_{y\in \sT_i} \Inner{\phi(x_i,y)}{w} \geq \Inner{\phi(x_i,y_i)}{w} + 
	c\NormI{w}$ otherwise, 
\end{enumerate}
where the probability is taken over the set $\sT_i$.
\end{assumption}
Intuitively, Assumption \ref{ass:distribution} states that, if $y_i$ is not the highest scoring structure under $w$, 
then the proposal distribution should return structures $\sT = \Set{y}$ whose average score is an 
additive constant factor away from the score of the observed structure $y_i$ with high probability.
Otherwise, the proposal distribution should return the 
singleton set $\sT = \Set{y_i}$ with high probability.
Note that Assumption \ref{ass:distribution} is in comparison much weaker than the low-norm assumption of \cite{honorio_structured_2015},
which requires that, in expectation, the norm of the difference between $\phi(x,y)$ and $\phi(x,y_i)$ (where $y$
is sampled from the proposal distribution) should decay as $\nicefrac{1}{\sqrt{m}}$. The following lemma
bounds the approximation error from above.

\begin{lemma}[Approximation Error]
\label{lemma:approx_error}
If the scale parameter of the Gumbel perturbations satisfies: $\beta \leq \min(\nicefrac{\NormI{w}}{\log m}, \nicefrac{\wmin}{\log((r-1)(\sqrt{m} - 1))})$ for all $w \neq 0$, and $n \geq m^{0.5 - c}$, then under Assumption \ref{ass:distribution}
$A(w, \sS) \leq \varepsilon_1(m, n, w)$, where
\begin{equation*}
\varepsilon_1(m, n, w) \defeq \frac{\NormI{w}}{\sqrt{m}} + \frac{1}{1 + \sqrt{m}},
\end{equation*}
and $\wmin = \min\Set{\Abs{w_j} \mid \Abs{w_j} \neq 0, j \in [d]}$.
\end{lemma}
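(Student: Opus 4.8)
The plan is to start from the exact identity supplied by Lemma \ref{lemma:loss_lb}. Taking the expectation over $\sT \sim \pR(\sS,w)$ of the per-sample identity there, and using that $y_i \in \sTa_i$ by construction, gives
\[
A(w,\sS) = \frac{1}{m}\sum_{i=1}^m \Exp[\sT_i]{\Prob[\gamma]{f_{w,\gamma,\sTa_i}(x_i)=y_i}\;\Prob[\gamma]{f_{w,\gamma}(x_i)\in \dY(x_i)\setminus\sTa_i}},
\]
so it suffices to bound each summand. For a fixed $i$ I would condition on the event $E_i$ that the conclusion of Assumption \ref{ass:distribution} holds for $(x_i,y_i)$. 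Since $\Prob{E_i^c}\le \NormI{w}/\sqrt{m}$ and the integrand is at most $1$, the contribution of $E_i^c$ to the $i$-th summand is at most $\NormI{w}/\sqrt{m}$, which already accounts for the first term of $\varepsilon_1$. It then remains to show that on $E_i$ the integrand is of order $1/\sqrt{m}$, treating the two regimes of the assumption separately.

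Regime (i), when $y_i$ is the strict maximizer of $\Inner{\phi(x_i,\parg)}{w}$: the proposal returns $\sT_i=\{y_i\}$, so $\sTa_i=\{y_i\}$, the first probability is $1$, and the product collapses to $\Prob[\gamma]{f_{w,\gamma}(x_i)\neq y_i}=1-q(y_i;x_i,w)$. Writing $1-q(y_i;x_i,w)=\sum_{y\neq y_i}\exp(\Inner{\phi(x_i,y)-\phi(x_i,y_i)}{w}/\beta)\,/\,(Z(w,x_i)/\exp(\Inner{\phi(x_i,y_i)}{w}/\beta))$ and lower-bounding the denominator by $1$, this is at most $(r-1)\exp(-g_i/\beta)$, where $g_i>0$ is the score margin of $y_i$. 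Using the structural assumption that nonzero feature coordinates are at least $1$, one argues $g_i\ge\wmin$; then the choice $\beta\le \wmin/\log((r-1)(\sqrt{m}-1))$ makes this at most $1/(\sqrt m-1)$, i.e. of order $1/\sqrt{m}$.

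Regime (ii): here I would discard the factor $\Prob[\gamma]{f_{w,\gamma}(x_i)\in\dY(x_i)\setminus\sTa_i}\le 1$ and bound only $\Prob[\gamma]{f_{w,\gamma,\sTa_i}(x_i)=y_i}=\exp(\Inner{\phi(x_i,y_i)}{w}/\beta)/Z_{w,x_i,\sTa_i}$. Since $\sTa_i\supseteq\sT_i$, we have $Z_{w,x_i,\sTa_i}\ge\sum_{y\in\sT_i}\exp(\Inner{\phi(x_i,y)}{w}/\beta)$, and convexity of $\exp$ (Jensen) together with part (ii) of Assumption \ref{ass:distribution} gives $\sum_{y\in\sT_i}\exp(\Inner{\phi(x_i,y)}{w}/\beta)\ge n\exp((\Inner{\phi(x_i,y_i)}{w}+c\NormI{w})/\beta)$; hence the probability is at most $\tfrac1n\exp(-c\NormI{w}/\beta)$. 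Finally $\beta\le\NormI{w}/\log m$ forces $\exp(-c\NormI{w}/\beta)\le m^{-c}$, and $n\ge m^{0.5-c}$ then gives a bound of order $1/\sqrt{m}$ as well (isolating the $y_i$-term in the partition function tightens this to the $1/(1+\sqrt m)$ shape).

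Combining the three contributions — the failure probability $\NormI{w}/\sqrt{m}$ on $E_i^c$ and the two $O(1/\sqrt m)$ bounds from the regimes of $E_i$, taking the worst case per sample — and averaging over $i$ yields $A(w,\sS)\le \varepsilon_1(m,n,w)$. I expect the main obstacle to be the margin estimate $g_i\ge\wmin$ in regime (i): everything else is a fairly mechanical chain of Jensen's inequality and substitution of the $\beta$- and $n$-constraints, but that step is where the structural feature assumption has to be used carefully, and it is also where the precise constants have to be tracked so that the two $O(1/\sqrt m)$ terms collapse exactly into the stated $1/(1+\sqrt m)$.
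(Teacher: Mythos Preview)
Your proposal is correct and follows essentially the same approach as the paper: split per sample into the case where $y_i$ is the strict score maximizer versus not, use the margin estimate $g_i\ge\wmin$ together with the $\beta\le\wmin/\log((r-1)(\sqrt m-1))$ constraint in the first case, and use Jensen plus the $\beta\le\NormI{w}/\log m$ and $n\ge m^{0.5-c}$ constraints in the second. The only organizational difference is that you condition on the event $E_i$ from Assumption~\ref{ass:distribution} up front in both regimes, whereas the paper uses it only in Case~II; in Case~I the paper simply bounds $A_i(w,\sS)\le\Prob[\gamma]{f_{w,\gamma}(x_i)\neq y_i}$ directly (this holds for \emph{any} $\sT_i$ since the second factor in the Lemma~\ref{lemma:loss_lb} identity is always at most $\Prob[\gamma]{f_{w,\gamma}(x_i)\neq y_i}$), which avoids paying the extra $\NormI{w}/\sqrt m$ there and yields the slightly tighter constant $1/\sqrt m$ rather than your $1/(\sqrt m-1)$. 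Your identification of the margin step $g_i\ge\wmin$ as the delicate point is exactly right --- that is also where the paper invokes the feature assumption.
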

\begin{proof}
Let $A_i(w,\sS) \defeq \Prob[\gamma \sim \pG(\beta)]{f_{w,\gamma}(x_i) \neq y_i} - 
	\Exp[\sT_i]{\Prob[\gamma \sim \pG(\beta)]{f_{w,\gamma,\sTa_i}(x_i) \neq y_i}}$
be the i-th term of $A(w, \sS)$. We will consider two cases. 
\paragraph{Case I:} $y_i$ is strictly the highest scoring structure for $x_i$ under $w$, i.e.,
$\forall y \neq y_i \, \Inner{\phi(x_i, y_i)}{w} > \Inner{\phi(x_i, y)}{w}$. First note that:
\begin{equation}
A_i(w, \sS) \leq \Prob[\gamma \sim \pG(\beta)]{f_{w,\gamma}(x_i) \neq y_i}. \label{eq:proof_app_err_1}
\end{equation}
We will prove that
$\Prob[\gamma \sim \pG(\beta)]{f_{w,\gamma}(x_i) \neq y_i} \leq \nicefrac{1}{\sqrt{m}}$. Assume instead that  
$\Prob[\gamma \sim \pG(\beta)]{f_{w,\gamma}(x_i) \neq y_i} > \nicefrac{1}{\sqrt{m}}$.
Then
\begin{gather*} 
\sum_{y \neq y_i} (\sqrt{m} - 1) e^{\Inner{\phi(x_i, y)}{w}/\beta} > e^{\Inner{\phi(x_i, y_i)}{w}/\beta}
\end{gather*}
Let $y' \in \dY(x_i) \setminus \Set{y_i}$ be such that $\Inner{\phi(x_i, y')}{w}$ is maximized. Then, 
$(r - 1)(\sqrt{m} - 1) e^{\Inner{\phi(x_i, y')}{w}/\beta}$ upper bounds the left-hand side of the above equation.
Taking log on both sides we get:
\begin{gather*}
 \beta > \dfrac{\Inner{\phi(x_i,y_i) - \phi(x_i, y')}{w}}{\log((r - 1) (\sqrt{m} - 1))}
\end{gather*}
Since $y_i$ is the unique maximizer of the score $\Inner{\phi(x_i, y_i)}{w}$,
$\phi(x_i, y')$ and $\phi(x_i, y_i)$ must differ on at least one element in the support set of $w$.
This implies, from above and the assumption that the minimum non-zero element of $\phi(x,y)$ is at least 1:
\begin{equation*}
\beta > \dfrac{\wmin}{\log((r - 1) (\sqrt{m} - 1))},
\end{equation*}
which violates Assumption \ref{ass:distribution}. Therefore from \eqref{eq:proof_app_err_1} we have that
$A_i(w, \sS) \leq \nicefrac{1}{\sqrt{m}}$.

\paragraph{Case II:} $\exists y \neq y_i \,: \Inner{\phi(x_i, y)}{w} \geq \Inner{\phi(x_i, y_i)}{w}$.
Let $\Delta_i(y) \defeq \phi(x_i, y) - \phi(x_i, y_i)$. In this case,
\begin{align}
A_i(w, \sS) &\Ineq[(a)]{\leq} \Exp[\sT_i]{\Prob[\gamma]{f_{w, \gamma, \sTa_i}(x_i) = y_i}} \notag \\
	&=  \Exp[\sT_i]{ 
			\frac{\exp(\Inner{\phi(x_i, y_i)}{w}/\beta)}{Z(w,x_i,\sTa_i)}  
		} 		\notag \\
	 &\Ineq[(b)]{=} \Exp[\sT_i]{ 
			\frac{1}{1 + \sum_{y \in \sT_i} e^{\Inner{\Delta_i(y)}{w}/\beta}}
		} \notag \\
	&\Ineq[(c)]{\leq} \Exp[S_i]{\frac{1}{1 + n e^{S_i/\beta}}}, \label{eq_proof_app_err_2}
\end{align}
where we have defined $S_i \defeq \frac{1}{n} \sum_{y \in \sT_i} \Inner{\Delta_i(y)}{w}$. 
In the above, in step (a) we dropped the term $\Prob[\gamma]{f_{w,\gamma}(x_i) = y_i}$ to get an upper bound.
Step (b) follows from dividing the numerator and denominator by $\exp(\Inner{\phi(x_i, y_i)}{w})$
and that $y_i \in \sTa_i$. Step (c) follows from Jensen's inequality. Now,
\begin{align}
\Exp[S_i]{\frac{1}{1 + n e^{S_i/\beta}}} 
&= \Exp[S_i]{\frac{1}{1 + n e^{S_i/\beta}} \mid S_i \geq \frac{\NormI{w}}{2}} \Prob{S_i \geq \frac{\NormI{w}}{2}} \notag \\
 &\quad + \Exp[S_i]{\frac{1}{1 + n e^{S_i/\beta}} \mid S_i < \frac{\NormI{w}}{2}} \Prob{S_i < \frac{\NormI{w}}{2}}  \notag \\
&\Ineq[(a)]{\leq}  \Exp[S_i]{\frac{1}{1 + n e^{S_i/\beta}} \mid S_i \geq \frac{\NormI{w}}{2}} + \frac{\NormI{w}}{\sqrt{m}} \notag \\
&\Ineq[(b)]{\leq} \Exp[S_i]{\frac{1}{1 + n e^{\nicefrac{S_i \log m}{\NormI{w}}}} \mid S_i \geq \frac{\NormI{w}}{2}} 
	+ \frac{\NormI{w}}{\sqrt{m}} \notag \\
&= \Exp[S_i]{\frac{1}{1 + n m^{\nicefrac{S_i}{\NormI{w}}}} \mid S_i \geq \frac{\NormI{w}}{2}} + \frac{\NormI{w}}{\sqrt{m}} \notag \\
&\leq \frac{1}{1 + n \sqrt{m}} + \frac{\NormI{w}}{\sqrt{m}}, \label{eq_proof_app_err_3}
\end{align}
where inequality (a) follows from Assumption \ref{ass:distribution} and (b) follows from the fact that
 $\beta \leq \nicefrac{\NormI{w}}{\log m}$.
Thus from \eqref{eq_proof_app_err_2} and \eqref{eq_proof_app_err_3} we have that 
$A_i(w, \sS) \leq \nicefrac{1}{(1 + n \sqrt{m})} + \nicefrac{\NormI{w}}{\sqrt{m}}$.

The final claim follows from Case I and II.
\end{proof}
Note that for $c \geq 0.5$ the number of structured outputs needed is $n = 1$, while in the worst case ($c = 0$)
$n = \sqrt{m}$. Furthermore, $n$ needs to grow
polynomially with respect to $m$ in order to achieve $\BigO{\nicefrac{1}{\sqrt{m}}}$ generalization error.
\begin{lemma}[Statistical Error]
\label{lemma:stat_error}
For any fixed data set $\sS$, the statistical error $B(w, \sS, \sTa)$ is bounded, simultaneously for
all proposal distributions $\pR(x_i, w)$ over $\Set{\sT_i}$, as follows:
\begin{align}
\Prob[\sT]{ (\forall w \in \dW)\; B(w, \sS, \sTa)  \leq
\varepsilon_2(d,s,n,r,m,\delta) \mid \sS}  \geq 1 - \delta,
\end{align}
where
\begin{align*}
\varepsilon_2(d,s,n,r,m,\delta) &\defeq 2 \sqrt{\frac{s (\ln d + 2 \ln (nr))}{m}} + \sqrt{\frac{\ln \nicefrac{1}{\delta}}{2m}} + 
 \sqrt{\frac{s (\ln d + 2 \ln (mr)) + \ln \nicefrac{1}{\delta}}{2m}}.
\end{align*}
\end{lemma}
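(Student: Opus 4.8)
The plan is to establish one-sided uniform convergence of $L(w,\sS,\sTa)$ to its mean $\Exp[\sT\sim\pR(\sS,w)]{L(w,\sS,\sTa)}$ over all $w\in\dW$, following the Rademacher argument in the proof of Theorem~\ref{thm:gen_bound} but coping with two new features: the sampling distribution $\pR(\sS,w)$ itself depends on $w$, and the stochastic decoder now ranges only over the augmented sets $\sTa_i$, each of size at most $n+1$. The first point is the real obstacle --- one cannot fix a single distribution and take a union bound over $w$, because changing $w$ changes the law of $\sT$. I would resolve it with the equivalence structure of Definition~\ref{def:equivalent_proposals}: since $\pR(x_i,w)$ depends on $w$ only through the linear ordering it induces on $\{\Inner{\phi(x_i,y)}{w}\}_{y\in\dY(x_i)}$, partition $\dW$ according to the tuple $\bar\pi=(\pi_1,\dots,\pi_m)$ of these orderings; within each class the product measure $\pR(\sS,\bar\pi)$ is \emph{fixed}. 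By the same linear-ordering counting used in the proof of Theorem~\ref{thm:gen_bound}, there are at most $d^s(mr)^{2s}$ such classes, so the logarithm of this count is at most $s(\ln d+2\ln(mr))$ --- precisely the combinatorial quantity appearing inside the third summand of $\varepsilon_2$.

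For a \emph{fixed} class $\bar\pi$, with $\sT\sim\pR(\sS,\bar\pi)$ now a fixed product measure, I would bound $\sup_{w\,:\,\pi(w)=\bar\pi}B(w,\sS,\sTa)$ in three standard moves. First, bounded differences: $L(w,\sS,\sTa)=\frac1m\sum_i\Prob[\gamma\sim\pG^n]{f_{w,\gamma,\sTa_i}(x_i)\neq y_i}$ is an average of $m$ quantities in $[0,1]$, the $i$-th depending on $\sT$ only through $\sT_i$, while $\Exp[\sT\sim\pR(\sS,\bar\pi)]{L(w,\sS,\sTa)}$ does not depend on $\sT$ at all; hence $\sup_{w\,:\,\pi(w)=\bar\pi}B$ has bounded differences $1/m$, and McDiarmid's inequality controls its deviation from its expectation by a term of order $\sqrt{\ln(1/\delta)/(2m)}$. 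Second, symmetrization: bound $\Exp[\sT]{\sup_{w\,:\,\pi(w)=\bar\pi}B}$ by twice the Rademacher complexity of the class of maps $w\mapsto L(w,\sS,\sTa)$ restricted to $\bar\pi$ (the $\sT_i$ are independent, though not identically distributed, which is all symmetrization needs). Third, reduction to orderings: exactly as in the proof of Theorem~\ref{thm:gen_bound}, move $\gamma$ outside the probability by Jensen and rescale $w$ by $\beta$, so that the Rademacher term is dominated by $\Exp[\gamma\sim\pG^n(1),\sigma]{\sup_w\frac1m\sum_i\sigma_i\Ind{f_{w,\gamma,\sTa_i}(x_i)\neq y_i}}$; for fixed $\gamma$ this indicator depends on $w$ only through the ordering induced on the at most $n+1$ perturbed scores of $\sTa_i$, so the same counting bounds the number of realizable sign patterns by $d^s(nr)^{2s}$, and Massart's finite class lemma yields the factor $\sqrt{s(\ln d+2\ln(nr))/m}$, doubled by the symmetrization step into the first summand of $\varepsilon_2$.

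Finally I would take a union bound over the at most $d^s(mr)^{2s}$ classes: the sample $\sT$ actually drawn --- from $\pR(\sS,w)$ for the data- and $\sT$-dependent $w$ produced by the learning algorithm --- lies in the support of $\pR(\sS,\bar\pi)$ for its own realized class, so requiring the per-class bound to hold for \emph{every} class certifies it for the relevant one, and hence for all $w\in\dW$ under the sample that is actually used. Spreading the confidence budget over the $\le d^s(mr)^{2s}$ classes converts the McDiarmid term of the first move into $\sqrt{(s(\ln d+2\ln(mr))+\ln(1/\delta))/(2m)}$, the third summand of $\varepsilon_2$; the remaining $\sqrt{\ln(1/\delta)/(2m)}$ summand comes from the union-free part of the McDiarmid/symmetrization bookkeeping (e.g.\ relating the empirical Rademacher complexity to its expectation). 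Summing the three contributions gives $\varepsilon_2(d,s,n,r,m,\delta)$. I expect the only delicate part to be carrying out the bounded-differences, symmetrization and Massart steps \emph{per class} and then observing that a single realized $\sT$ simultaneously validates the bound for every $w$ lying in its own equivalence class; the remainder is routine constant-chasing.
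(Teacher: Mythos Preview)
Your proposal is correct and follows essentially the same route as the paper: fix a proposal distribution (via an equivalence class $\bar\pi$), apply McDiarmid with bounded differences $1/m$, symmetrize to a Rademacher complexity, bound that via the linear-ordering count $d^s(nr)^{2s}$ and Massart, and finally take a union bound over the at most $d^s(mr)^{2s}$ equivalence classes of proposal distributions. The only cosmetic difference is that the paper takes the supremum over \emph{all} $w\in\dW$ (not just those in the current class $\bar\pi$) before the union bound, which is harmless since the Rademacher bound already covers the full $\dW$; your per-class supremum works equally well. One small bookkeeping remark: the middle summand $\sqrt{\ln(1/\delta)/(2m)}$ in $\varepsilon_2$ does not in fact arise from a separate ``empirical-to-expected Rademacher'' concentration step --- the paper bounds $\Exp[\sT]{\varphi(\sT)}$ directly --- so the proof actually yields the slightly tighter $2\sqrt{s(\ln d+2\ln(nr))/m}+\sqrt{(s(\ln d+2\ln(mr))+\ln(1/\delta))/(2m)}$, and the stated $\varepsilon_2$ is just a looser but valid upper bound.
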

\begin{proof}
We adapt the proof of Rademacher based uniform convergence for our purpose.
Fix the distribution over $\sT$ to $\pR(\sS, w')$ for some $w'$.
Recall that $\sTa = \Set{\sTa_i}$ with $\sTa_i = \Set{y_i} \union \sT_i$ and
the elements of $\sT_i$ are drawn \iid from $\pR(x_i, w')$.
Since the only random part in $\sTa_i$ is $\sT_i$ and $y_i \in \sS$, it suffices to show concentration
of $\Exp[\sT]{L(w, \sS, \sT)} -  L(w, \sS, \sT)$ for all $w$ and $\sS$.
For a fixed $\sS$, we
will consider $L(w, \sS, \sT)$ to be a function of $\sT$ and $w$ and denote it by $L(\sT, w; \sS)$. In what follows,
we will consider $\sT$ to be an $mn$-dimensional vector whose elements (structured outputs) are conditionally independent
(but not identically distributed) given a data set $\sS$.
Define, 
\begin{align}
\varphi(\sT; \sS) \defeq \sup_{w \in \dW} \Exp[\sT \sim \pR(\sS,w')]{L(\sT, w; \sS)}  - L(\sT, w; \sS).
\end{align}
$\varphi(\sT; \sS)$ is $(\nicefrac{1}{m})$-Lipschitz and the elements of $\sT$ are independent. Therefore, by McDiarmid's inequality,
we have that:
\begin{align}
\Prob[\sT]{\Exp[\sT]{\varphi(\sT; \sS)} - \varphi(\sT; \sS) \leq \sqrt{\frac{\ln (\nicefrac{1}{\delta})}{2m}} \mid \sS} \geq 1 - \delta.
\end{align}
Therefore, with probability at least $1 - \delta$ over the choice of $\sT$:
\begin{align}
(\forall w \in \dW) \; \Exp[\sT]{L(\sT, w; \sS)} - L(\sT, w; \sS)  
& \leq \sup_{w \in \dW} \Exp[\sT]{L(\sT, w; \sS)} - L(\sT, w; \sS) \notag \\
	&= \varphi(\sT; \sS) 
	 \leq \Exp[\sT]{\varphi(\sT; \sS)} + \sqrt{\frac{\ln \nicefrac{1}{\delta}}{2m}} \label{eq:st_error_1}.
\end{align}
Next, we will use a symmetrization argument to bound $\Exp[\sT]{\varphi(\sT; \sS)}$. 
Let $\sT' \sim \pR(\sS)$ be an independent copy of $\sT$.
Observe that:
\begin{align*}
\Exp[\sT']{L(\sT, w; \sS) \mid \sT} &= L(\sT, w; \sS) \\
\Exp[\sT']{L(\sT', w; \sS) \mid \sT} &= \Exp[\sT]{L(\sT, w; \sS)}.
\end{align*}
Now,
\begin{align*}
\Exp[\sT]{\varphi(\sT)} 
& = \Exp[\sT]{\! \sup_{w \in \dW}\! \Exp[\sT]{L(\sT, w; \sS)} - L(\sT, w; \sS) } \\
& = \Exp[\sT]{\! \sup_{w \in \dW}\! \Exp[\sT']{L(\sT', w; \sS) \mid \sT} - \Exp[\sT']{L(\sT, w; \sS) \mid \sT}  } \\
& \leq \Exp[\sT, \sT']{ \sup_{w \in \dW}\! \frac{1}{m} \sum_{i=1}^m z_i' - z_i },
\end{align*}
where $z_i' = \Prob[\gamma]{f_{w,\gamma,\sT'}(x_i) \neq y_i}$ and $z_i = \Prob[\gamma]{f_{w,\gamma,\sT}(x_i) \neq y_i}$.
Since $z_i' - z_i$ has a distribution that is symmetric around zero, $z_i' - z_i$ and $\sigma_i(z_i' - z_i)$
have the same distribution, where $\sigma_i$'s are independent Rademacher variables. Continuing the above derivation,
\begin{align*}
\Exp[\sT]{\varphi(\sT)} 
	& \leq \Exp[\sT, \sT', \sigma]{ \sup_{w \in \dW} \frac{1}{m} \sum_{i=1}^m \sigma_i(z_i' - z_i) } \\
	& = \frac{2}{m} \Exp[\sT, \sigma]{ \sup_{w \in \dW} \sum_{i=1}^m \sigma_i \Prob[\gamma]{f_{w,\gamma,\sT}(x_i) \neq y_i}} \\
	& = 2\Exp[\sT]{\eRC_{\sT}(\mathcal{G})},
\end{align*}
where $\eRC_{\sT}(\mathcal{G})$ is the empirical Rademacher complexity of the function class $\mathcal{G} = \Set{g_{w} \mid w \in \dW}$
with respect to $\sT$, with $g_w(x,y) = \Prob[\gamma]{f_{w,\gamma,\sT}(x) \neq y}$.
Next, using the same argument as in the proof of Theorem \ref{thm:gen_bound}, we can bound $\eRC_{\sT}(\mathcal{G})$ 
for any set $\sT$, and get the following bound:
\begin{align}
\Exp[\sT]{\varphi(\sT)} \leq 
2 \sqrt{\frac{s (\log d + 2 \log (nr))}{m}} \label{eq:st_error_2}
\end{align}
Note that the above differs from the bound in Theorem \ref{thm:gen_bound} in the log factor since we need to
consider linear orderings of $nr$ structured outputs.
Therefore from \eqref{eq:st_error_1} and \eqref{eq:st_error_2} we have that:
\begin{align}
\mathrm{Pr}_{\sT}\{ (\forall w \in \dW)\; \Exp[\sT]{L(\sT, w; \sS)} - L(\sT, w; \sS) 
\leq \varepsilon_2(d,s,n,r,m,\delta) \mid \sS \} \geq 1 - \delta.
\end{align}
By Definition \ref{def:equivalent_proposals}
and from the results by \cite{bennett1956determination,
bennett1960multidimensional, cover1967number},
there are at most ${d \choose s} (mr)^{2s}$ effective
(equivalence classes) proposal distributions $\pR(.)$
Taking a union bound over all such proposal distributions we prove our claim.
\end{proof}
Now, we are ready to present our main result proving uniform convergence of the randomized loss $L(w, \sS, \sTa)$.
More specifically, we provide $\BigOT{\nicefrac{1}{\sqrt{m}}}$ generalization error. 
\begin{theorem}
With probability at least $1 - 2\delta$ over the choice of the data set $\sS$ and the set of random structured outputs $\sT$,
and simultaneously for all $w \in \dW$ and proposal distributions $\pR(x,w)$:
\begin{equation}
L(w, \pD) \leq L(w, \sS, \sTa) + \varepsilon_1 + \varepsilon_2,
\end{equation}
where $\varepsilon_1$ and $\varepsilon_2$ are defined in Lemma \ref{lemma:approx_error} and \ref{lemma:stat_error} respectively.
\end{theorem}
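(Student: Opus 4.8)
The plan is to chain together the three results already proved: Theorem~\ref{thm:gen_bound} (the CRF loss $L(w,\sS)$ upper-bounds $L(w,\pD)$ up to $\varepsilon$), Lemma~\ref{lemma:approx_error} (the approximation error $A(w,\sS)\le\varepsilon_1$), and Lemma~\ref{lemma:stat_error} (the statistical error $B(w,\sS,\sTa)\le\varepsilon_2$). Concretely, I would first recall the decomposition $L(w,\sS) - L(w,\sS,\sTa) = A(w,\sS) + B(w,\sS,\sTa)$ introduced just before Assumption~\ref{ass:distribution}, which rearranges to $L(w,\sS) = L(w,\sS,\sTa) + A(w,\sS) + B(w,\sS,\sTa)$. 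Since $L(w,\sS,\sTa) \le L(w,\sS)$ by Lemma~\ref{lemma:loss_lb} (this is consistent with, but not needed for, the bound), the real content is bounding $A$ and $B$ from above and plugging into Theorem~\ref{thm:gen_bound}.

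The key steps, in order, are: (1) apply Theorem~\ref{thm:gen_bound} to get, with probability at least $1-\delta$ over $\sS$, that $L(w,\pD) \le L(w,\sS) + \varepsilon$ simultaneously for all $w \in \dW$; however, I should be careful here, because the statement of the final theorem lists only $\varepsilon_1 + \varepsilon_2$ and claims probability $1-2\delta$, so the $\varepsilon$ term of Theorem~\ref{thm:gen_bound} must be absorbed --- I expect it is folded into $\varepsilon_2$, whose definition indeed contains an extra $\sqrt{(s(\ln d + 2\ln(mr)) + \ln(1/\delta))/(2m)}$ summand that does not arise from the statistical-error argument of Lemma~\ref{lemma:stat_error} alone and is exactly of the shape needed to dominate the $3\sqrt{\ln(2/\delta)/(2m)}$ and Rademacher pieces from Theorem~\ref{thm:gen_bound}. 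So step (1) is really: bound $L(w,\pD) \le L(w,\sS) + (\text{part of }\varepsilon_2)$. (2) Write $L(w,\sS) = L(w,\sS,\sTa) + A(w,\sS) + B(w,\sS,\sTa)$ and bound $A(w,\sS) \le \varepsilon_1$ using Lemma~\ref{lemma:approx_error} (valid for all $w$, deterministically, under Assumption~\ref{ass:distribution} and the stated constraints on $\beta$ and $n$). (3) Bound $B(w,\sS,\sTa) \le \varepsilon_2$ (or its remaining part) using Lemma~\ref{lemma:stat_error}, which holds with probability at least $1-\delta$ over $\sT$ given $\sS$, simultaneously for all $w$ and all proposal distributions. (4) Combine via a union bound: the event from Theorem~\ref{thm:gen_bound} fails with probability at most $\delta$ over $\sS$, and conditioned on any $\sS$ the event from Lemma~\ref{lemma:stat_error} fails with probability at most $\delta$ over $\sT$; hence by a union bound the total failure probability is at most $2\delta$, giving the claimed $1-2\delta$.

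The main obstacle --- really a bookkeeping obstacle rather than a mathematical one --- is reconciling the constants: I need to check that $\varepsilon$ from Theorem~\ref{thm:gen_bound} together with the genuine statistical fluctuation term from Lemma~\ref{lemma:stat_error} is exactly what the quantity $\varepsilon_2$ in Lemma~\ref{lemma:stat_error} was defined to be, so that the final bound reads cleanly as $L(w,\pD) \le L(w,\sS,\sTa) + \varepsilon_1 + \varepsilon_2$ with no leftover terms. Comparing term by term: $\varepsilon_2$ has a $2\sqrt{s(\ln d + 2\ln(nr))/m}$ term (the Rademacher complexity of the randomized class), a $\sqrt{\ln(1/\delta)/(2m)}$ term (the McDiarmid fluctuation), and a third $\sqrt{(s(\ln d + 2\ln(mr)) + \ln(1/\delta))/(2m)}$ term which is precisely a clean upper bound for $2\sqrt{s(\ln d + 2\ln(mr))/m} + 3\sqrt{\ln(2/\delta)/(2m)} = \varepsilon$ --- or at least absorbs it up to the choice of constants --- so the arithmetic goes through. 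I would state this absorption explicitly (e.g. $\varepsilon \le \sqrt{(s(\ln d + 2\ln(mr)) + \ln(1/\delta))/(2m)}$ after merging the $\delta$'s appropriately, using $\sqrt{a}+\sqrt{b}\le\sqrt{2(a+b)}$ and $\ln(2/\delta)\le 2\ln(1/\delta)$ for $\delta$ small) and then conclude. No step requires new ideas beyond what is already in the excerpt; the proof is a three-line assembly once the constant-matching is acknowledged.
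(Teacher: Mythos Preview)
Your high-level plan is exactly the paper's: combine Theorem~\ref{thm:gen_bound}, Lemma~\ref{lemma:approx_error}, and Lemma~\ref{lemma:stat_error} through the decomposition $L(w,\sS)-L(w,\sS,\sTa)=A(w,\sS)+B(w,\sS,\sTa)$, and use a union bound to collect the two $\delta$'s into the stated $1-2\delta$. The paper's own proof is a single sentence---``follows directly from Lemma~\ref{lemma:approx_error} and Lemma~\ref{lemma:stat_error} by taking an expectation with respect to $\sS$''---which does not explicitly invoke Theorem~\ref{thm:gen_bound}, but some bridge from $L(w,\sS)$ to $L(w,\pD)$ is clearly needed, so your reading is the right one.

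Where you go wrong is the constant bookkeeping you flag at the end. The third summand of $\varepsilon_2$, namely $\sqrt{(s(\ln d+2\ln(mr))+\ln(1/\delta))/(2m)}$, is \emph{not} an allowance for the $\varepsilon$ of Theorem~\ref{thm:gen_bound}; it arises inside the proof of Lemma~\ref{lemma:stat_error} from the union bound over the ${d\choose s}(mr)^{2s}$ equivalence classes of proposal distributions (replacing $\delta$ by $\delta/(d^s(mr)^{2s})$ in the McDiarmid term). Your proposed inequality $\varepsilon\le\sqrt{(s(\ln d+2\ln(mr))+\ln(1/\delta))/(2m)}$ is false in general: writing $a=s(\ln d+2\ln(mr))$, the left side contains $2\sqrt{a/m}=2\sqrt{2}\,\sqrt{a/(2m)}$, which already exceeds $\sqrt{(a+\ln(1/\delta))/(2m)}$ whenever $a$ dominates $\ln(1/\delta)$. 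So the absorption you sketch cannot be carried out, and the displayed bound in the theorem, as stated, is short by the $\varepsilon(d,s,m,r,\delta)$ of Theorem~\ref{thm:gen_bound}. This is a defect of the paper's statement, not of your strategy; the correct conclusion of your chain is $L(w,\pD)\le L(w,\sS,\sTa)+\varepsilon+\varepsilon_1+\varepsilon_2$, which is still $\widetilde{\mathcal O}(1/\sqrt{m})$.
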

\begin{proof}
The claim follows directly from Lemma \ref{lemma:approx_error} and Lemma \ref{lemma:stat_error} 
by taking an expectation with respect to $\sS$.
\end{proof}

\subsection{Examples of proposal distributions}
Having proved uniform convergence of our randomized procedure for learning the parameters of a MAP decoder,
we turn our attention to the proposal distribution. We want to construct proposal distributions of the form
given by Definition \ref{def:equivalent_proposals} that satisfy Assumption \ref{ass:distribution} with 
a large enough constant $c$. Additionally, for our randomized procedure to run in polynomial time we want the proposal
distribution to sample a structured output in constant time. 
The following algorithm is directly motivated  by Assumption \ref{ass:distribution} where the
set $\text{neighbors}_k(y)$ for an input $x$ is defined as: 
$\text{neighbors}_k(y) \defeq \Set{y' \in \dY(x) \setminus \Set{y} \mid H(y,y') \leq k}$,
with $H(\parg, \parg)$ being the Hamming distance.
\begin{algorithm}[H]
\begin{algorithmic}[1]
\caption{An example algorithm implementing a proposal distribution that depends on $y_i \in \sS$. \label{alg:proposal1}}
\Require Weight vector $w \in \dW$, $(x_i, y_i) \in \sS$, parameter $\alpha \in [0,1]$ and $k \geq 1$.
\Ensure A structured output $y \in \dY(x)$.
\State With probability  $\alpha$ pick $y'$ uniformly at random from $\dY(x_i)$,
and with probability $1 - \alpha$ set $y'$ to $y_i$.
\State $y \gets y'$.
\For{$y' \in \textsf{neighbors}_k(y)$} 
\If{$\Inner{\phi(x,y')}{w} \geq \Inner{\phi(x,y)}{w}$}
\State $y \gets y'$.
\EndIf
\EndFor
\State \Return $y$.
\end{algorithmic}
\end{algorithm}
\begin{remark}
Setting $\alpha = \nicefrac{\NormI{w}}{\sqrt{m}}$,
Algorithm \ref{alg:proposal1} satisfies the condition given in Definition \ref{def:equivalent_proposals}
as well as Assumption \ref{ass:distribution}. Since,
for any $w,w' \in \dW$ that induce the same linear ordering over $\dY(x)$, conditioned on the $y'$ sampled
in Step 3, the algorithm returns the same $y$ for both $w$ and $w'$ with probability $1$.
\end{remark}
Also note that using a larger $k$ ensures that the above algorithm satisfies Assumption \ref{ass:distribution}
with a larger constant $c$, thereby reducing the number of structured outputs that need to be sampled ($n$),
at the cost of increased computation for sampling a single structured output.

The parameter $\alpha$ in Algorithm \ref{alg:proposal1} controls exploration vs exploitation.
As $\alpha$ becomes smaller Algorithm \ref{alg:proposal1} returns a proposal from within
the neighborhood of $y_i$ while for larger $\alpha$ it explores high scoring structures in 
the entire set of candidate structures.

\subsection{Minimizing the CRF loss}
In this section we discuss strategies for minimizing the (randomized) CRF loss $L(w, \sS, \sTa)$.
Minimizing the randomized CRF loss $L(w, \sS, \sTa)$ is equivalent to maximizing the \emph{randomized CRF gain} 
$U(w, \sS, \sTa) = \frac{1}{m} \sum_{i=1}^m \Prob[\gamma]{f_{w, \gamma, \sTa_i}(x_i) = y_i}$,
which in turn is equivalent to maximizing $\log U(w, \sS, \sTa)$. The latter can be accomplished
by gradient based methods with the gradient of $\log U(w, \sS, \sTa)$ given by:
\begin{equation}
\Grad_{w} \log U(w, \sS, \sTa) = \frac{\sum_{i=1}^m q_i (\phi(x_i, y_i) - \Exp{\phi(x_i, y)})}{\sum_{i=1}^m q_i},
\label{eq:grad}
\end{equation}
where $q_i \defeq \Prob[\gamma]{f_{w, \gamma, \sTa_i}(x_i) = y_i}$, and the expectation is taken with respect to
$y \sim \pQ(x_i, w, \sTa_i)$. The exact CRF loss ($L(w, \sS)$) can similarly be minimized
by using $\sTa_i = \dY(x_i)$, for all $x_i \in \sS$, in the above. Note that by Jensen's inequality
$\log U(w, \sS, \sTa) \geq \frac{1}{m} \sum_{i=1}^m \log \Prob[\gamma]{f_{w, \gamma, \sTa_i}(x_i) = y_i}$,
where the latter can be identified as the log likelihood of the data set $\sS$ under the CRF distributions $\Set{\pQ(x_i, w, \sTa_i)}$.
Therefore, $L(w, \sS, \sTa)$ can be equivalently minimized by minimizing the negative log-likelihood of the data, which in turn
gives rise to the well known \emph{moment-matching} rule known in the literature. Thus, Algorithm \ref{alg:proposal1} 
can be used with standard moment matching where the expectation is approximated by averaging over
$y$'s drawn from the distribution $\pQ(x_i, w, \sTa_i)$. While standard moment matching is in general intractable,
moment matching in conjunction with Algorithm \ref{alg:proposal1} is always efficient. 
Indeed, \eqref{eq:grad} can be thought of as a ``weighted'' moment matching rule with weights $q_i$.

\begin{figure*}[htbp]
\fbox{%
\begin{minipage}{0.99\linewidth}
\centering
\begin{tabular}{p{0.48\linewidth} p{0.48\linewidth}}
\vspace{0pt} \includegraphics[width=\linewidth]{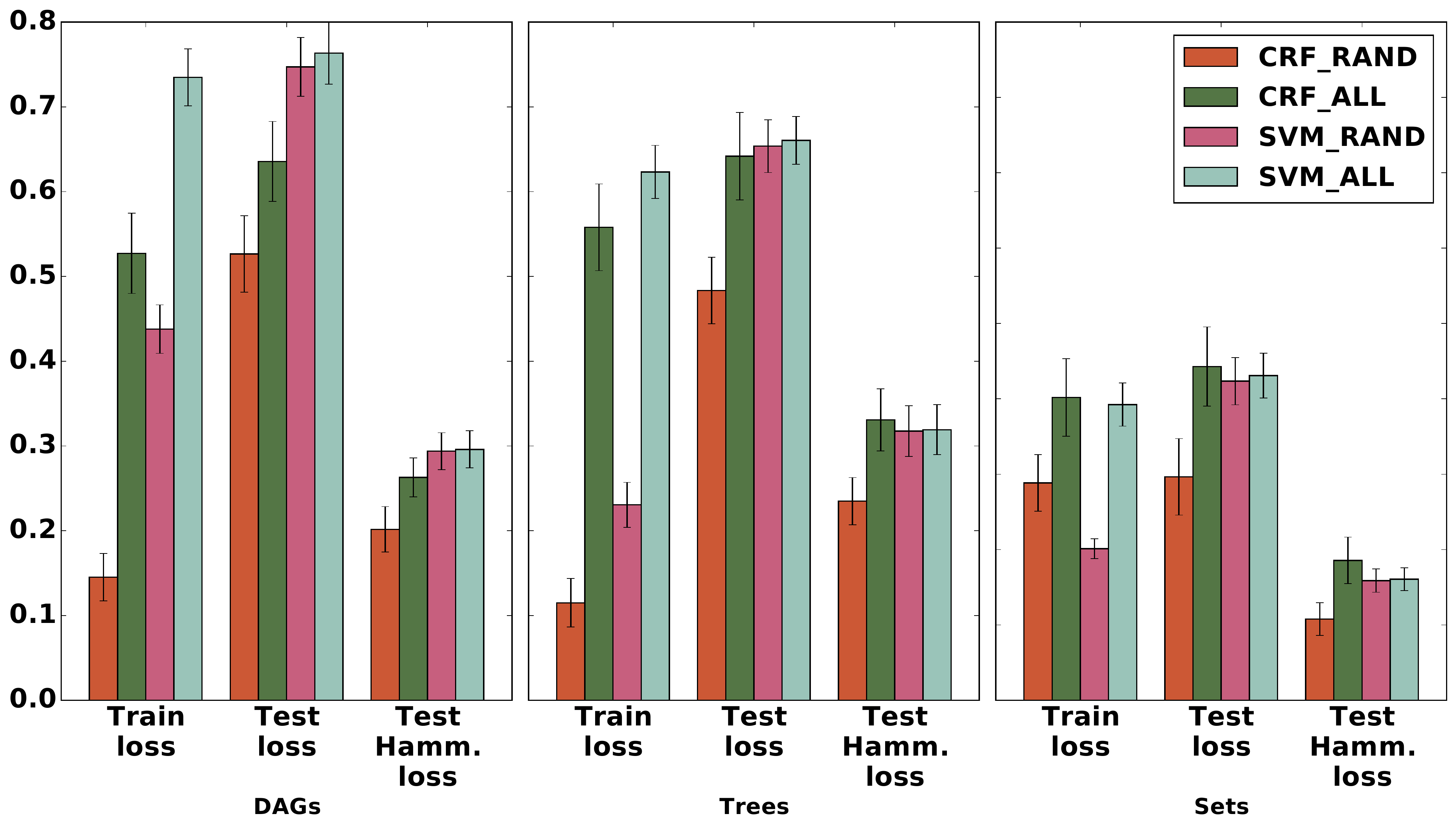} &
\vspace{0pt} \includegraphics[width=\linewidth]{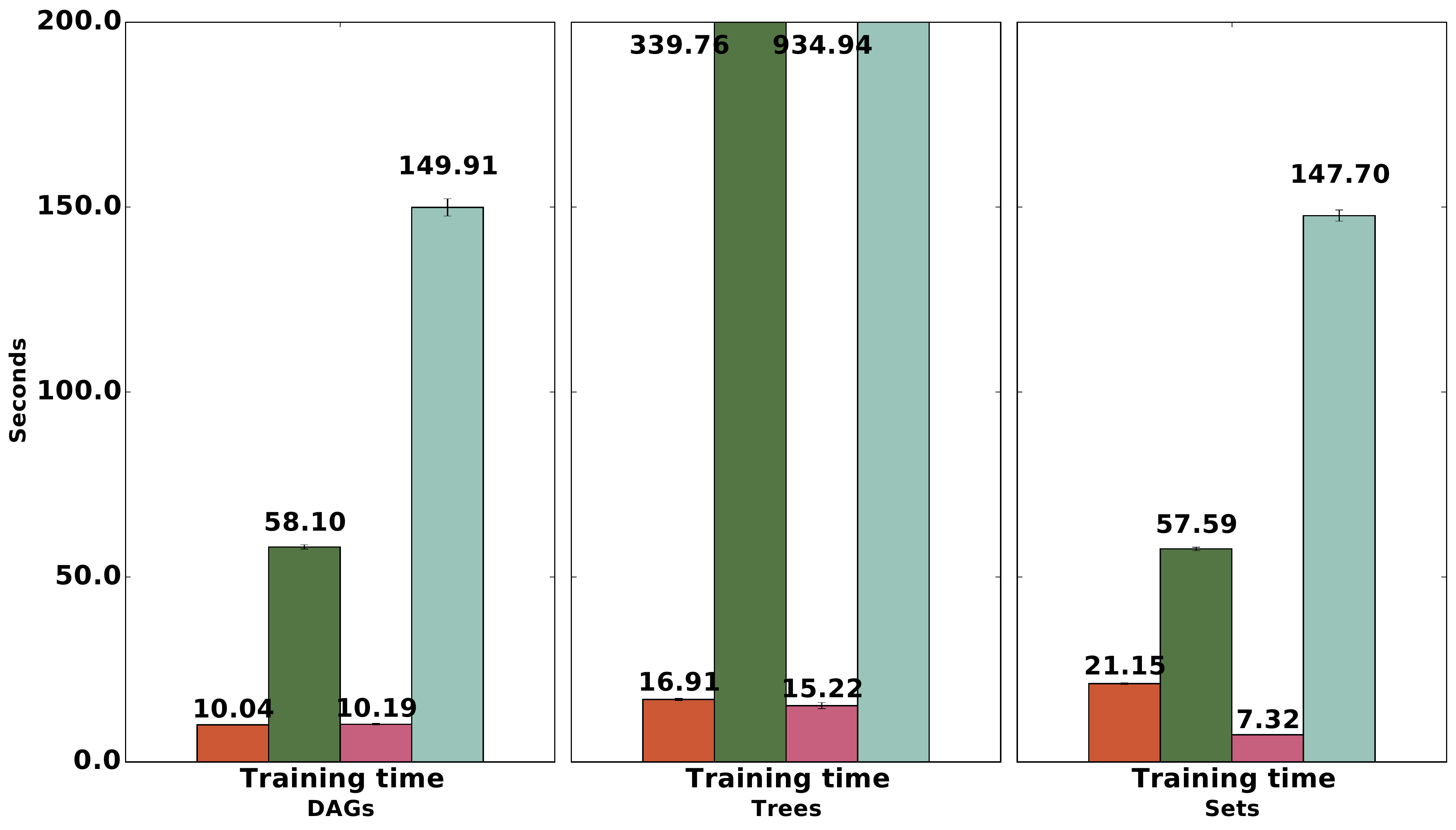}
\end{tabular}
\caption{(\textbf{Left}) Training and test set loss \eqref{eq:full_loss}, and test set hamming loss
of the exact method (\texttt{CRF\_ALL}) and our randomized algorithm (\texttt{CRF\_RAND}), the randomized SVM
method by \cite{honorio_structured_2015} (\texttt{SVM\_RAND}), and the exact SVM (\texttt{SVM\_ALL}), a.k.a max-margin, method 
of \cite{taskar_max-margin_2003}. For the randomized algorithms, i.e., \CRFRAND{} and \SVMRAND{}, the training loss is the
randomized training loss, i.e., $L(w, \sS, \sTa)$ and $L(w, \sS, \sT)$ respectively.
(\textbf{Right}) Training time in seconds for the various methods.
\label{fig:error}}
\end{minipage}}
\end{figure*}

\section{Experiments}
In this section, we evaluate our proposed method (\CRFRAND{}) on synthetic data
against three other methods: \CRFALL{}, \SVMRAND{}, and \SVM{}. The \CRFRAND{} method
minimizes the randomized loss $L(w, \sS, \sTa)$ \eqref{eq:rand_loss} subject to $\ell_1$ penalty
(as prescribed by Lemma \ref{lemma:approx_error}) by sampling
structured outputs from the proposal distribution given by Algorithm \ref{alg:proposal1}.
The \CRFALL{} method minimizes the exact (exponential-time) loss $L(w, \sS)$ \eqref{eq:full_loss}.
Lastly, \SVM{} is the widely used max-margin method of \cite{taskar_max-margin_2003},
while \SVMRAND{} is the randomized SVM method proposed by \cite{honorio_structured_2015}.

We generate a ground truth parameter $w^* \in \R^d$ with random 
entries sampled independently from a zero mean Gaussian distribution
with variance $100$. We then randomly set all but $s = \sqrt{d}$ entries to be zero.
We then generate a training set of $S$ of 100 samples.
We used the following joint feature map $\phi(x,y)$ for an input output pair.
For every pair of possible edges or elements $i$ and $j$, we set 
$\phi(x,y)_{i,j} = \Ind{x_{i,j} = 1 \Land i \in y \Land j \in y}$.
For instance, for directed spanning trees of $v$ nodes, we have ${x \in \{0,1\}^{\binom{v}{2}}}$ 
and ${\phi(x,y) \in \R^{\binom{v}{2}}}$. 
We considered directed spanning trees of $6$ nodes, 
directed acyclic graphs of $5$ nodes and $2$ parents per node, 
and sets of $4$ elements chosen from $15$ possible elements.
In order to generate each training sample ${(x,y) \in S}$, 
we generated a random vector $x$ with independent Bernoulli entries with parameter $\nicefrac{1}{2}$.
After generating $x$, we set ${y = f_{w^*}(x)}$, i.e., 
we solved \eqref{eq:decoder} in order to produce the latent structured output $y$ 
from the observed input $x$ and the parameter ${w^*}$.

We set the $\ell_1$ regularization parameter to be $0.01$ for all methods.
We used 20 iterations of gradient descent with step size of $\nicefrac{1}{\sqrt{t}}$ for all algorithms, where $t$ is the 
iteration, to learn the parameter $w$ for both the exact method and our randomized algorithm.
In order to simplify gradient calculations, we simply set $\beta = 1/\log((r - 1))(\sqrt{m} - 1))$ during training.
For \CRFRAND, we used Algorithm \ref{alg:proposal1} with $\alpha = \nicefrac{\NormI{w}}{\sqrt{m}}$ and invoke the 
algorithm $\sqrt{m}$ number of times to generate the set $\sT_i$ for each $i \in [m]$ and $w$.
This results in $n = \Abs{\sT_i} \leq \sqrt{m}$.
To evaluate the generalization performance of our algorithm we generated a test set $S' = \Set{x'_i, y'_i}_{i=1}^m$
of 100 samples and calculated two losses. The first was the full CRF loss \eqref{eq:full_loss} on the test set $S'$,
and the second was the test set hamming loss $\frac{1}{m} \sum_{i=1}^m \hat{H}(f_{\hat{w}}(x_i'), y_i')$, 
where $\hat{H}(\parg, \parg)$ is the normalized Hamming distance, and $\hat{w}$ is the learned parameter.
Hamming distance is a popular distortion function
used in structured prediction, and provides a more realistic assessment of the performance of a decoder,
since in most cases it suffices to recover most of the structure rather than predicting the structure exactly.
For DAGs and trees the Hamming distance counts the number of different edges between the structured outputs,
while for sets it counts the number of different elements. We normalize the Hamming distance to be between 0 and 1.
We computed the mean and $95\%$ confidence intervals of each of these metrics by repeating the above procedure 30 times.

Figure \ref{fig:error} shows the training and test set errors and the training time 
of the four different algorithms. \CRFRAND{} significantly outperformed other algorithms in both the test set loss
and test set hamming loss, while being $\approx$ 6 times faster than the exact method (\CRFALL{}) for DAGs,
$\approx$ 20 times faster for trees, and $\approx$ 3 times faster for sets. The exact CRF method (\CRFALL{}) was also
significantly faster than the exact SVM (\SVM{}) method while achieving similar test set loss and test set hamming loss.
\section{Related Work}
\label{sec:related_work}
Significant body of work exists in computing a single MAP estimate by exploiting problem 
specific structure, for instance, super-modularity, linear programming relaxations 
to name a few. However, in this paper we are concerned with the problem of learning
the parameters of MAP perturbation models. Among generalization bounds for MAP perturbation
models, \cite{hazan2013learning} prove PAC-Bayesian generalization bounds for weight based perturbations.
\cite{hazan2013learning} additionally propose learning weight based MAP-perturbation models by 
minimizing the PAC-Bayesian upper bound on the generalization error. However, 
their method for learning the parameters involves constructing restricted families of posterior distributions over 
the weights $w$ that lead to smooth, but not necessarily convex, generalization bounds that can be optimized using
gradient based methods. For learning MAP-perturbation models with structure based (Gumbel) perturbations,
\cite{gane2014learning} propose a hard-EM algorithm which is both worst-case exponential time
and has no theoretical guarantees. \cite{papandreou2011perturb} on the other hand, propose learning
Gumbel MAP-perturbation models by using the moment matching method. However, such an approach is
tractable only for energy functions for which the global minimum can be computed efficiently.
Lastly, \cite{hazan2013sampling,orabona2014measure} consider the problem of 
efficiently sampling from MAP perturbation models using low dimensional perturbations.
\cite{hazan2012partition, hazan2013sampling} additionally propose ways to approximate 
and bound the partition function. While such bounds on the partition function can be used, in principle,
to approximately minimize the CRF loss \eqref{eq:full_loss}, it is unclear if one can obtain uniform
convergence guarantees for the same, given that computing or even approximating the partition function
is NP-hard \cite{barahona1982computational, chandrasekaran2008complexity}.
\section{Concluding remarks}
We conclude with some directions for future work.
While in this work we showed that one can learn with approximate inference,
it would be interesting to analyze approximate inference for prediction on an independent test set.
Another avenue for future work would be to develop more powerful proposal distributions that allow
for more finer-grained control over the parameter $c$ by exploiting problem specific structure like submodularity.

\section*{Acknowledgements}
This material is based upon work supported by the National Science Foundation under Grant No. 1716609-IIS.

\bibliographystyle{apalike}
\bibliography{paper}

\end{document}